\newif\ifdraft
\newcommandx{\nt}[2][1=]{\todo[linecolor=red,
			backgroundcolor=red!10,bordercolor=red,#1]{ #2}}
\newcommandx{\jy}[2][1=]{\todo[linecolor=green,
			backgroundcolor=green!10,bordercolor=green,#1]{JY: #2}}
\newcommand{\nt}[1]{{}}
\newcommand{\jy}[1]{{}}
\newif\iftwocolumn
\newtheorem{proposition}{Proposition}[section]
\theoremstyle{definition}
\theoremstyle{remark}
\def\subsubsection{\@startsection{subsubsection}
                                 {3}
                                 {\z@ \hspace*{1mm}}
                                 {0ex plus 0.1ex minus 0.1ex}
                                 {0ex}
                                 {\normalfont\normalsize\itshape}}
\newcommand{\mpp}{\textsc{MRPP}\xspace}
\newcommand{\ilp}{\textsc{ILP}\xspace}
\newcommand{\bcpr}{\textsc{BVPR}\xspace}
\newcommand{\ccpr}{\textsc{CVPR}\xspace}
\newcommand{\crp}{\textsc{VSP}\xspace}
\newcommand{\mcp}{\textsc{MCP}\xspace}
\newcommand{\csmp}{\textsc{CSMP}\xspace}
\newcommand{\makespan}{\textsc{MKPN}\xspace}
\newcommand{\anm}{\textsc{ANM}\xspace}
\newcommand{\aprt}{\textsc{APRT}\xspace}
\newif\ifarxiv
\title{Toward Efficient Physical and Algorithmic Design of Automated Garages
}
\author{Teng Guo   \qquad Jingjin Yu
\thanks{G. Teng, and J. Yu are with the Department of 
Computer Science, Rutgers, the State University of New Jersey, Piscataway, NJ, USA. 
Emails: {\tt\small \{ teng.guo, jingjin.yu\}@rutgers.edu}.
This work is supported in part by NSF award IIS-1845888 and an Amazon Research Award. 
}
}
\begin{document}
\bstctlcite{IEEEexample:BSTcontrol}

\maketitle
\thispagestyle{empty}
\pagestyle{empty}

\ifdraft
\begin{picture}(0,0)%
\put(-12,105){
\framebox(505,40){\parbox{\dimexpr2\linewidth+\fboxsep-\fboxrule}{
\textcolor{blue}{
The file is formatted to look identical to the final compiled IEEE 
conference PDF, with additional margins added for making margin 
notes. Use $\backslash$todo$\{$...$\}$ for general side comments
and $\backslash$jy$\{$...$\}$ for JJ's comments. Set 
$\backslash$drafttrue to $\backslash$draftfalse to remove the 
formatting. 
}}}}
\end{picture}
\vspace*{-5mm}
\fi

\begin{abstract}
Parking in large metropolitan areas is often a time-consuming task with further implications toward traffic patterns that affect urban landscaping. 
Reducing the premium space needed for parking has led to the development of automated mechanical parking systems. 
Compared to regular garages having one or two rows of vehicles in each island, automated garages can have multiple rows of vehicles stacked together to support higher parking demands. 
Although this multi-row layout reduces parking space, it makes the parking and retrieval more complicated. 
In this work, we propose an automated garage design that supports near $100\%$ parking density. 
Modeling the problem of parking and retrieving multiple vehicles as a special class of multi-robot path planning problem, we propose associated algorithms for handling all common operations of the automated garage, including (1) optimal algorithm and near-optimal methods that find feasible and efficient solutions for simultaneous parking/retrieval and (2) a novel shuffling mechanism to rearrange vehicles to facilitate scheduled retrieval at rush hours.
We conduct thorough simulation studies showing the proposed methods are promising for large and high-density real-world parking applications.
\end{abstract}

\section{Introduction}\label{sec:intro}
The invention of automated parking systems (garages) helps solve parking issues in areas where space carries significant premiums, such as city centers and other heavily populated areas.
Nowadays, parking space is becoming increasingly scarce and expensive; a spot in Manhattan could easily surpass $200,000$ USD.
Developing garages supporting high-density parking that save space and are more convenient is thus highly attractive for economic/efficiency reasons.

In automated garages, human drivers only need to drop off (pick up) the vehicle in a specific I/O (Input/Output) port without taking care of the parking process.
Vehicles in such a system do not require ambient space for opening the doors, and can thus be parked much closer.
Moving a vehicle to a parking spot or a port is the key function for such systems.  
One of the solutions is to use robotic valets to move vehicles. Such systems are already commercially available, such as HKSTP \cite{HKSTP} in Hong Kong.
In such systems \cite{nayak2013robotic}, vehicles are parked such that they may block each other, requiring multiple rearrangements to retrieve a specific vehicle. 
Unfortunately, little information can be found on how well these systems function, e.g., their parking/retrieval efficiency. 
\begin{figure}[h]
    \centering
    \includegraphics[width=\linewidth]{./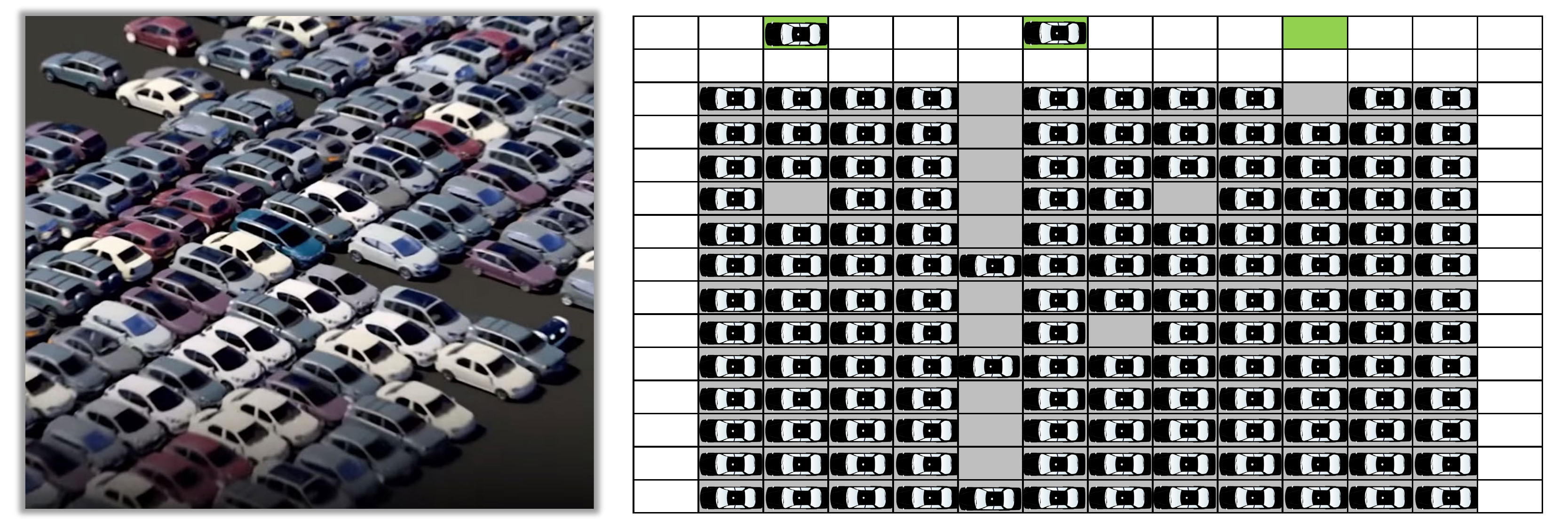}
    \caption{Left: an illustration of the density level of the envisioned automated garage system. Right: The grid-based abstraction with three I/O ports for vehicle dropoff and retrieval.}
    \label{fig:av_system}
    \vspace{-2mm}
\end{figure}%
Other solutions focus on parking for self-driving vehicles. 
In such an automated garage, vehicles are able to drive themselves  to parking slots and ports.  
This makes the system more flexible but provides limited space-saving advantages, besides requiring autonomy from the vehicles.

Recently, many efficient multi-robot path planning algorithms have been proposed, making it possible for lowering parking and retrieval cost using multiple robotic valets.
In this work, we study multi-robot based parking and retrieval problem, proposing a complete automated garage design, supporting near $100\%$ parking density, and developing associated algorithms for efficiently operating the garage. 

\textbf{Results and contributions.}
The main results and contributions of our work are as follows. In designing the automated garage, we introduce \emph{batched vehicle parking and retrieval} (\bcpr) and \emph{continuous vehicle parking and retrieval} (\ccpr) problems modeling the key operations required by such a garage, which facilitate future theoretical and algorithmic studies of automated garage systems.

On the algorithmic side, we study a system that can support parking density as high as $(m_1-2)(m_2-2)/m_1m_2$ on a $m_1\times m_2$ grid map and allow multi-vehicle parking and retrieving, which approaches $100\%$ parking density for large garages. Leveraging the regularity of the system, which is grid-like, we propose an optimal \ilp-based method and a fast suboptimal algorithm based on sequential planning.
Our suboptimal algorithm is highly scalable while maintaining a good level of solution quality, making it suitable for large-scale applications. 
We further introduce a shuffling mechanism to rearrange vehicles during off-peak hours for fast vehicle retrieval during rush hours, if the retrieval order can be anticipated. Our rearrangement algorithm performs such shuffles with total time cost of $O(m_1m_2)$ at near full garage density.

%

\textbf{Related work.} 
Researchers have proposed diverse approaches toward efficient high-density parking solutions.
Many systems for self-driving vehicles have been studied \cite{ferreira2014self,timpner2015k,nourinejad2018designing},
%
%
where vehicles are parked using a central controller and may be stacked in several rows and can block each other.
These designs increase parking capacity by up to $50\%$. However, the retrieval becomes highly complex due to blockages and is heavily affected by the maneuverability of self-driving vehicles.

With most vehicles being incapable of self-driving, robotic valet based high-density parking systems could be a more appropriate choice. 
The Puzzle Based Storage (PBS) system or  grid-based shuttle system, proposed originally by \cite{Gue2007PuzzlebasedSS}, is one of the most promising high-density storage systems.
In such a system, storage units, which can be  AGVs or shuttles, are movable in four cardinal directions. There must be at least one empty cell (escort).  To retrieve a vehicle, one must utilize the escorts to move the desired vehicles to an I/O port. This is similar to the 15-puzzle, which is known to be NP-hard to optimally solve \cite{ratner1986finding}.
Optimal algorithms for retrieving one vehicle with a single escort and multiple escorts have been proposed in \cite{Gue2007PuzzlebasedSS,optimalMultiEscort}.
However, these methods only consider retrieving one single vehicle at a time.
Besides, the average retrieval time can be much longer than conventional aisle-based solutions.
To achieve a trade-off between capacity demands and retrieval efficiency, we suggest using more escorts and I/O ports that allow retrieving and parking multiple vehicles simultaneously by utilizing recent advanced Multi-Robot Path Planning (\mpp) algorithms \cite{okoso2022high}.

\mpp has been widely studied. In the static or one-shot setting \cite{stern2019multi},  given a graph environment and a number of robots with each robot having a unique start position and a goal position,  the task is to find collision-free paths for all the robots from start to goal.
It has been proven that solving one-shot \mpp optimally in terms of minimizing either makespan or sum of costs is NP-hard \cite{surynek2009novel,yu2013structure}.
Solvers for \mpp can be categorized into \emph{optimal} and \emph{suboptimal}.  
Optimal  solvers either reduce \mpp to other well-studied problems, such as ILP\cite{yu2016optimal}, SAT\cite{surynek2010optimization} and ASP\cite{erdem2013general} or use search algorithms to search the joint space to find the optimal solution \cite{sharon2015conflict,sharon2013increasing}.
Due to the NP-hardness, optimal solvers are not suitable for solving large problems.
Bounded suboptimal solvers \cite{barer2014suboptimal} achieve better scalability while still having a strong optimality guarantee. 
However, they still scale poorly, especially in high-density environments.
There are polynomial time algorithms for solving large-scale \mpp \cite{luna2011push}, which are at the cost of solution quality. Other $O(1)$ time-optimal polynomial time algorithms \cite{GuoYuRSS22,GuoFenYu22IROS,han2018sear,yu2018constant} are mainly focusing on minimizing the makespan, which is not very suitable for continuous settings.

\textbf{Organization.}
The rest of the paper is organized as follows. Sec.~\ref{sec:problem} covers the preliminaries including garage design. 
In Sec.~\ref{sec:algo-bcpr}-Sec.~\ref{sec:algo-crp}, we provide the algorithms for operating the automated garage. We perform thorough evaluations and discussions of the garage system in Sec.~\ref{sec:evaluation} and conclude with Sec.~\ref{sec:conclusion}.

\section{Preliminaries}\label{sec:problem}
\subsection{Garage Design Specification}

In this study, the automated grid-based garage is a four-connected $m_1\times m_2$ grid $\mathcal{G}(\mathcal{V},\mathcal{E})$ (see Fig.~\ref{fig:av_system}). 
There are $n_o$ I/O ports (referred simply as \emph{ports} here on) distributed on the top border of the grid for dropping off vehicles for parking or for retrieving a specific parked vehicle.
A port can only be used for either retrieving or parking at a given time.
Vehicles must be parked at a \emph{parking spot}, a cell of the lower center $(m_1 -2)\times(m_2-2)$ subgrid. 
Once a vacant spot is parked, it becomes a movable obstacle.
%
%
$\mathcal{O}=\{o_1,...,o_{n_o}\}$ is the set of ports and $\mathcal{P}=\{p_1,...,p_{|\mathcal{P}|}\}$ is the set of parking spots.  
%

\subsection{Batched Vehicle Parking and Retrieval (\bcpr)}
\emph{Batched vehicle parking and retrieval}, or \bcpr, seeks to optimize parking and retrieval in a \emph{batch} mode. 
In a single batch, there are $n_p$ vehicles to park, and $n_r$ vehicles to retrieve, $n_l$ parked vehicles to remain.
Denote $\mathcal{C}=\mathcal{C}_p\cup\mathcal{C}_r\cup\mathcal{C}_l$ as the set of all vehicles.
At any time, the maximum capacity cannot be exceeded, i.e.,  $|\mathcal{C}|<|\mathcal{P}|$.
Time is discretized into timesteps and multiple vehicles carried by AGVs/shuttles can move simultaneously.
In each timestep, each vehicle can move left, right, up, down or wait at the current position.
Collisions among the vehicles should be avoided:
\begin{enumerate}[leftmargin=4.5mm]
    \item Meet collision. Two vehicles cannot be at the same grid point at any timestep: $\forall i,j\in\mathcal{C}, v_i(t)\neq v_j(t)$;
    \item Head-on collision. Two vehicles cannot swap locations by traversing the same edge in the opposite direction: $\forall i,j\in\mathcal{C}, (v_i(t)= v_j(t+1) \wedge v_i(t+1)=v_j(t))=\text{false}$;
    \item Perpendicular following collisions (see Fig.~\ref{fig:perp}). One vehicle cannot follow another when their moving directions are perpendicular. Denote $\hat{e}_i(t)=v_i(t+1)-v_i(t)$ as the moving direction vector of vehicle $i$ at timestep $t$, then $\forall i,j \in \mathcal{C}, (v_i(t+1)=v_j(t) \wedge \hat{e}_i(t)\perp \hat{e}_j(t))=\text{false}$.
\end{enumerate}
\begin{figure}[!htbp]
    \centering
    \includegraphics[width=.80\linewidth]{./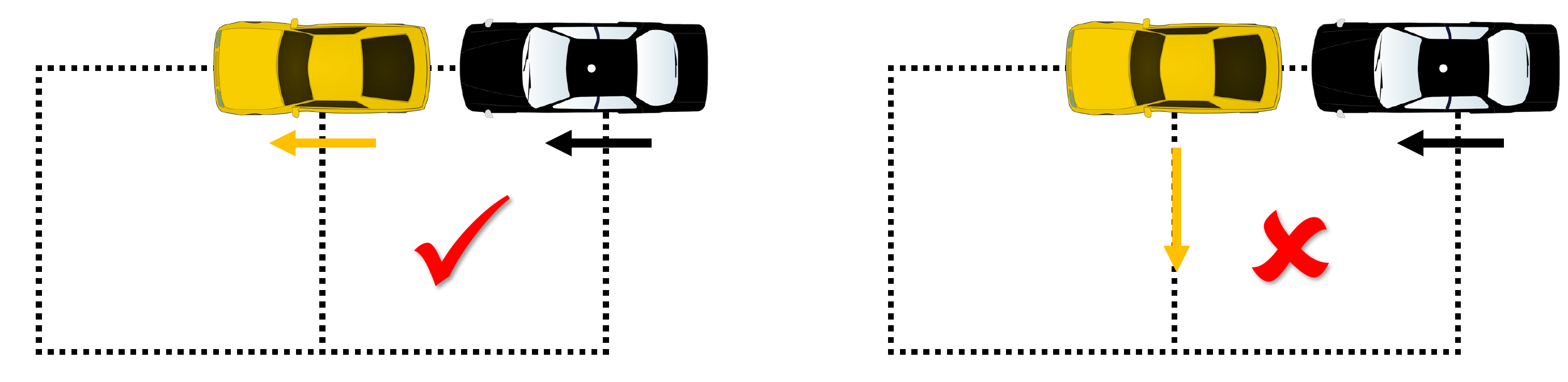}
    \caption{While parallel movement of vehicles in the same direction (left) is allowed, \emph{perpendicular following} of vehicles (right) is forbidden.}
    \label{fig:perp}
\end{figure}%

Unlike certain \mpp formulations \cite{stern2019multi}, we need to consider perpendicular following collisions, which makes the problem harder to solve.
The task is to find a collision-free path for each vehicle, moving it from its current position to a desired position.
Specifically, for a vehicle to be retrieved, its goal is a specified port. 
For other vehicles, the desired position is any one of the parking spots.
The following criteria are used to evaluate the solution quality:
\begin{enumerate}[leftmargin=4.5mm]
    \item Average parking/retrieving time (\aprt): the average time required to retrieve or park a vehicle.
    \item Makespan (\makespan): the time required to move all vehicles to their desired positions.
    \item Average number of moves per task (\anm): the sum of the distance of all vehicles divided by the total number of vehicles in $\mathcal{C}_p\cup \mathcal{C}_r$.
\end{enumerate}
In general, these objectives create  a  Pareto  front \cite{yu2013structure} and it  is  not always possible to simultaneously optimize any two of these objectives.

\subsection{Continuous Vehicle Parking and Retrieval (\ccpr)}
\ccpr is the continuous version of the vehicle parking and retrieval problem. It inherits most of \bcpr's structure, but with a few key differences.
In this formulation, we make the following assumptions. 
When a vehicle $i\in \mathcal{C}_r$ arrives at its desired port, it would be removed from the environment.
There will be new vehicles appearing at the ports that need to be parked (within capacity) and there would be new requests for retrieving vehicles.
Besides, when a port is being used for retrieving a vehicle, other users cannot park vehicles at the port until the retrieval task is finished.
Except for \makespan when the time horizon is infinite or fixed, the three criteria can still be used for evaluating the solution quality. 

\subsection{Vehicle Shuffling Problem (\crp)}
In real-world garages, there are often off-peak periods (e.g., after the morning rush hours) where the system reaches its capacity and there are few requests for retrieval.
If the retrieval order of the vehicles at a later time (e.g., afternoon rush hours) is known, then we can utilize the information to reshuffle the vehicles to facilitate the retrieval later. 
This problem is formulated as a one-shot \mpp. 
Given the start configuration $X_I$ and a goal configuration $X_G$, we need to find collision-free paths to achieve the reconfiguration. The goal configuration is determined according to the retrieval time order of the vehicles; vehicles expected to be retrieved earlier should be parked closer to the ports so that they are not blocked by other vehicles that will leave later.

\section{Solving \bcpr}\label{sec:algo-bcpr}
\subsection{Integer Linear Programming (\ilp)}
Building on network flow based ideas from 
\cite{yu2016optimal,Ma2016OptimalTA}, we reduce \bcpr to a multi-commodity max-flow problem and use integer programming to solve it.
Vehicles in $\mathcal{C}_r$ have a specific goal location (port) and must be treated as different commodities.
On the other hand, since the vehicles in $\mathcal{C}_p\cup\mathcal{C}_l$ can be parked at any one of the parking slot, they can be seen as one single commodity.
Assuming an instance can be solved in $T$ steps, we  construct a $T$-step  time-extended  network as shown in Fig.~\ref{fig:maxflow}. 

A $T$-step  time-extended network is  a  directed  network $\mathcal{N}_T=(\mathcal{V}_T,\mathcal{E}_T)$  with directed, unit-capacity edges.
The  network $\mathcal{N}_T$ contains $T+1$ copies of the original graph's vertices $\mathcal{V}$.
The copy of vertex $u\in \mathcal{V}$ at timestep $t$ is denoted as $u_t$.
At timestep $t$, an edge $(u_t,v_{t+1})$ is added to $\mathcal{E}_T$ if $(u,v)\in \mathcal{E}$ or $v=u$.
For $i\in \mathcal{C}_r$, we can give a supply of one unit of commodity type $i$ at the vertex $s_{i0}$ where $s_i$ is the start vertex of $i$.
To ensure that vehicle $i$ arrives at its  port, we add a feedback edge connecting its goal vertex $g_i$ at $T$ to its source node $s_{i0}$.
For the vehicles that need to be parked, we create an auxiliary source node $\alpha$ and an auxiliary sink node $\beta$.
For each $i\in \mathcal{C}_p \cup \mathcal{C}_l$, we add an edge of unit capacity connecting node $\alpha$ to its starting node $s_{i0}$.
As the vehicles can be parked at any one of the parking slots, for any vertex $u\in \mathcal{P}$ we add an edge of unity capacity connecting $u_T$ and $\beta$.
A supply of $n_p+n_l$ unit of commodity of the type for the vehicles in $\mathcal{C}_p\cup\mathcal{C}_l$ can be given at the node $\alpha$.

To solve the multi-commodity max-flow using \ilp, we create a set of binary variables $X=\{x_{iuvt}\}, i=0,...,n_r, (u,v)\in \mathcal{E}$ or $u=v$, $0\leq t\leq T$; a variable set to true means that the corresponding edge is used in the final solution.
The \ilp formulation is given as follows.
%
  \vspace{1mm}
\begin{equation}
      \text{Minimize \quad} \sum_{i,t,u\neq v}x_{iuvt} \label{eqn:eq_obj}
\end{equation}
\begin{equation}
    \text{subject to \quad}\forall t,v,i\quad \sum_{u}x_{iuv(t-1)}=\sum_{w}x_{ivwt}\label{eqn:flow_constraint}
\end{equation}
\begin{equation}
     \forall t,i,v\quad \sum_{v}x_{iuvt}\leq 1\label{eqn:vertex_constraint}
\end{equation}
  \begin{equation}
       \forall t,i,(u,v)\in\mathcal{E}\quad  \sum_{i}(x_{iuvt}+x_{ivut})\leq 1\label{eqn:edge_constraint}
  \end{equation}
\begin{equation}
     \forall t,i,(u,v)\perp(v,w) \sum_{i}(x_{iuvt}+x_{ivwt})\leq 1 \label{eqn:following_constraint}
\end{equation}
\begin{equation}
            \sum_{i=0}^{n_r-1} x_{ig_is_iT}+\sum_{u\in \mathcal{P}}x_{n_r u\beta T}=n_p+n_r+n_l \label{eqn:goal_constraints}
\end{equation}

\begin{equation}
          x_{iuvt}=\begin{cases}
         0&\text{if $i$ does not traverse edge $(u,v)$ at $t$ }\\
         1 &\text{if $i$  traverses edge $(u,v)$ at $t$ }\\
         \end{cases}\label{eqn:eq_binaries}
\end{equation}
\vspace{1mm}
%
\jy{When referring to equations, use eqref instead of ref, which will give you () by default.}

In Eq. \eqref{eqn:eq_obj}, we minimize the total number of moves of all vehicles within the time horizon $T$.
Eq.~\eqref{eqn:flow_constraint} specifies the flow conservation constraints at each grid point.
Eq.~\eqref{eqn:vertex_constraint} specifies the vertex constraints to avoid meet-collisions.
In Eq.~\eqref{eqn:edge_constraint}, the vehicles are not allowed to traverse the same edge in opposite directions.
Eq.~\eqref{eqn:following_constraint} specifies the constraints that forbid perpendicular following conflicts.
If the programming for $T$-step time-expanded network is feasible, then the solution is found.
Otherwise, we increase $T$ step by step until there is a feasible solution.
The smallest $T$ for which the integer programming has a solution is the minimum makespan.
As a result, the \ilp finds a makespan-optimal solution minimizing the total number of moves as  a secondary objective.
\begin{figure}[!htbp]
\vspace{2mm}
    \centering
    \includegraphics[width=.90\linewidth]{./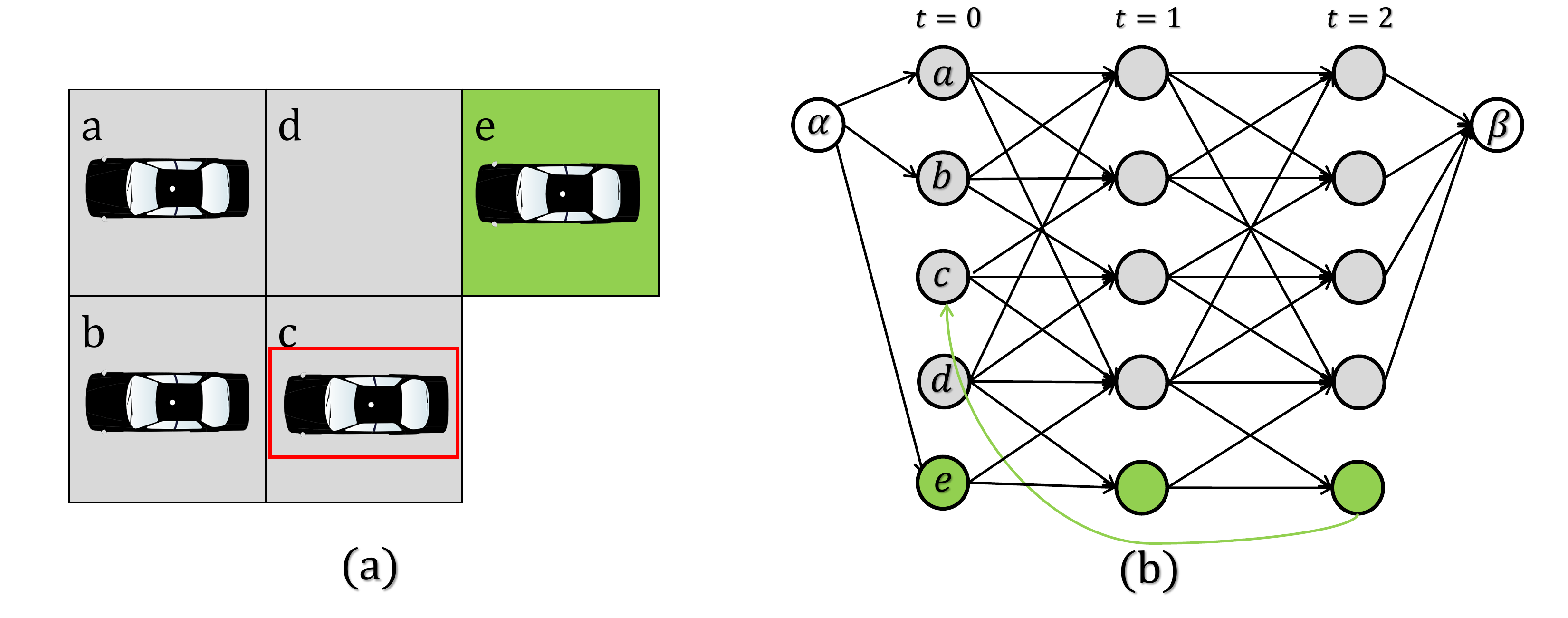}
    \vspace{-2mm}
    \caption{(a) A illustrative \bcpr instance with 4 parking spots and one port. The vehicles within the red rectangle need to be retrieved while other vehicles should be parked in the gray areas. (b) The 2-step flow network reduced from the \bcpr instance.}
    \label{fig:maxflow}
    \vspace{-2mm}
\end{figure}%
\subsection{Efficient Heuristics for High-Density Planning}
\ilp can find makespan-optimal solutions but it scales poorly. 
We seek a fast algorithm that is able to quickly solve dense \bcpr instances at a small cost of solution quality.
The algorithm we propose is built on a single-vehicle motion primitive of retrieving and parking. 
\subsubsection{Motion Primitive for Single-Vehicle Parking/Retrieving}
Regardless of the solution quality, \bcpr can be solved by sequentially planning for each vehicle in $\mathcal{C}_r\cup\mathcal{C}_p$.
Specifically, for each round we only consider completing one single task for a given vehicle $i\in\mathcal{C}_r\cup\mathcal{C}_p$, which is either moving $i$ to its port or one of the parking spots.
%
%
After vehicle $i$ arrives at its destination, the next task is solved.
%
%
The examples in Fig.~\ref{fig:single_retrieve} and Fig.~\ref{fig:single_parking}, where the maximum capacity is reached, illustrate the method's operations.

For the retrieval scenario in Fig.~\ref{fig:single_retrieve}, the vehicle marked by the red rectangle is to be retrieved.
For realizing the intuitive path indicated by the dashed lines on the left, vehicles blocking the path should be cleared out of the way, which can be easily achieved by moving those blocking vehicles one step to the left or to the right, utilizing the two empty columns. 
%
%
Such a motion primitive can always successfully retrieve a vehicle without deadlocks.

For the parking scenario in Fig.~\ref{fig:single_parking}, we need to park the vehicle in the green port.
We do so by first searching for an empty spot (escort) greedily. 
Using the mechanism of parallel moving of vehicles, the escort can first be moved to the column of the parking vehicle in one timestep and then moved to the position right below the vehicle in one timestep.
After that, the vehicle can move directly to the escort, which is its destination.
\begin{figure}[!htbp]
\vspace{1mm}
    \centering
    \includegraphics[width=\linewidth]{./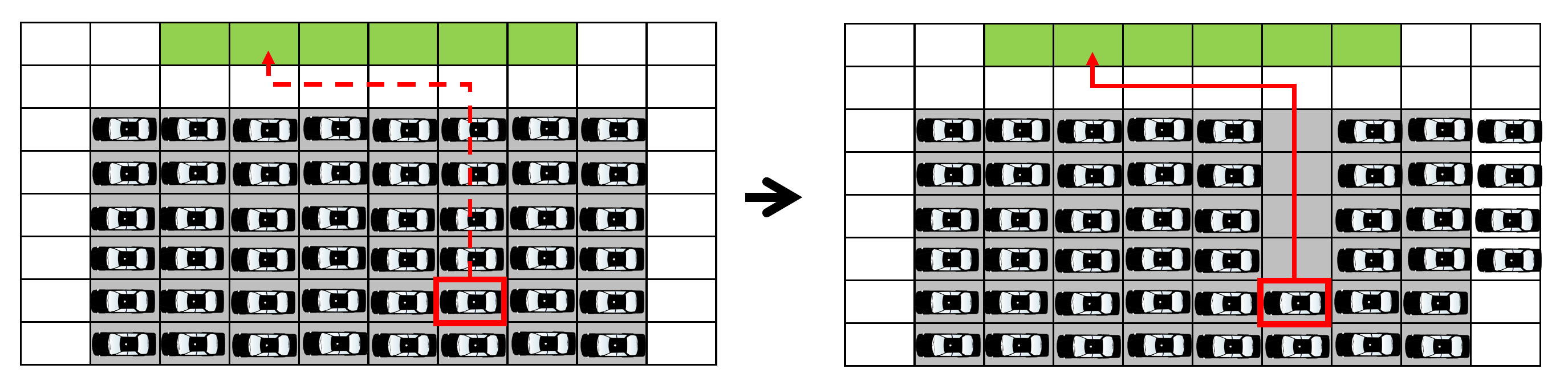}
    \caption{The motion primitive for retrieving a vehicle.}
    \label{fig:single_retrieve}
\end{figure}%

\begin{figure}[!htbp]
    \centering
    \includegraphics[width=\linewidth]{./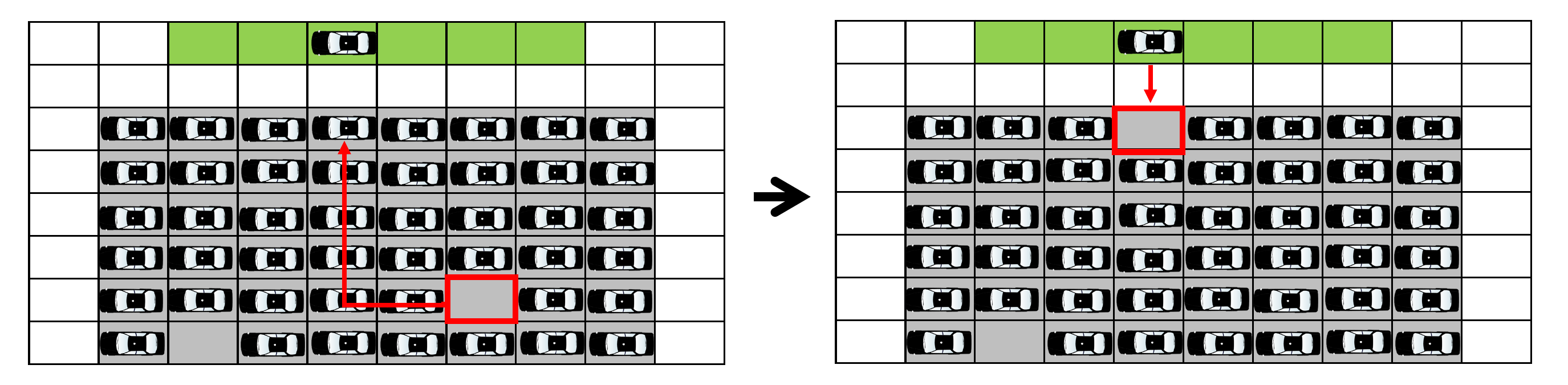}
    \caption{The motion primitive for parking a vehicle.}
    \label{fig:single_parking}
\end{figure}%

Sequential planning always returns a solution if there is one; however, when multiple parking and retrieval requests are to be executed, sequential solutions result in poor performance as measured by MKPN and ARPT.
\subsubsection{Coupling Single Motion Primitives by \mcp (\csmp)} 
Minimal  Communication  Policy (\mcp) \cite{ma2017multi} is a robust multi-robot execution policy to handle unexpected delays without stopping unaffected robots.
During  execution,  \mcp  preserves  the  order  by which robots visit each vertex as in the original plan. 
When a robot $i$ is about to perform a move action and enter a vertex $v$, \mcp checks whether robot $i$ is the next to enter that vertex by the original plan.  
If a different robot, $j$, is planned to enter $v$ next, then $i$ waits in its current vertex until $j$ leaves $v$.

We use \mcp to introduce concurrency to plans found through sequential planning.
The algorithm is described in Alg.~\ref{alg:csmp} and Alg.~\ref{alg:MCP}.
Alg.~\ref{alg:csmp} describes the framework of \csmp.
First, we find initial plans for all the vehicles in $\mathcal{C}_r\cup\mathcal{C}_p$ one by one (Line 4-6).
After obtaining the paths, we remove all the waiting states and record the order of vehicle visits for each vertex in a list of queues (Line 7).
Then we enter a loop executing the plans using \mcp until all vehicles have finished the tasks and reach their destination (Line 8-15).
In Alg.~\ref{alg:MCP}, if  $i$ is the next vehicle that enters vertex $v_i$ according to the original order, we check if there is a vehicle currently at $v_i$.
If there is not, we let $i$ enter $v_i$.
If another vehicle $j$ is currently occupying $v_j$, we examine if $j$ is moving to its next vertex $v_j$ in the next step by recursively calling the function 
$\texttt{MCPMove}$. 
If $j$ is moving to $v_j$ in the next step and the moving directions of $i,j$ are not perpendicular, we let vehicle $i$ enter vertex $v_i$.
Otherwise, $i$ should  wait at $u_i$. 
The algorithm is deadlock-free by construction; we omit the relatively straightforward proof due to the page limit.
\begin{proposition}
\csmp is dead-lock free and always finds a feasible solution in finite time  if there is one.
\end{proposition}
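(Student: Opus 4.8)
The plan is to establish the claim in two parts: whenever the \bcpr instance is solvable, \csmp produces per-vehicle routes that \mcp can execute to completion, and the \mcp execution stays collision-free and terminates after finitely many steps with every vehicle at its goal. For the first part I would lean on what is already established: the sequential-planning stage (Lines 4--6 of Alg.~\ref{alg:csmp}) succeeds because each single-vehicle retrieval/parking primitive of Sec.~\ref{sec:algo-bcpr} always succeeds under the garage's standing assumptions (two empty columns, $|\mathcal{C}|<|\mathcal{P}|$) and sequential planning returns a solution whenever one exists; concatenating these primitive macro-plans gives a \emph{valid joint plan} $\Pi$ (each macro-plan is collision-free by construction and a waiting vehicle is a legal move) in which every vehicle ends at a required position. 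Deleting each vehicle's waits (Line 7) leaves its route $\pi_i$ and terminal cell unchanged and fixes the per-vertex entry orders used as \mcp's queues. Thus it remains to show the \mcp execution of $\{\pi_i\}$ stays collision-free and halts, since at termination every vehicle has walked $\pi_i$ to its end.

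Collision-freeness is inherited from $\Pi$: \mcp never lets a vehicle deviate from $\pi_i$, only inserts waits, and preserves, for every vertex, the order in which vehicles enter it; by the standard order-preservation argument for \mcp \cite{ma2017multi} no meet or head-on collision occurs, and the explicit perpendicularity test in Alg.~\ref{alg:MCP} --- refusing a move into an occupied cell whose occupant moves perpendicularly --- rules out perpendicular-following collisions. (The recursion in \texttt{MCPMove} is well-founded for the same reason the execution cannot deadlock, argued next.)

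The crux is deadlock-freeness. Suppose toward a contradiction that from some step on no unfinished vehicle ever moves; then no vehicle enters a new vertex, so the configuration and all queue pointers are frozen and the stall is permanent. To each stuck vehicle $i$ (currently at $u_i$, with next route cell $v_i$) assign a blocker $b(i)$: if $v_i$ is occupied, $b(i)$ is its occupant, which must also be permanently stuck (otherwise $i$ could advance within two steps); otherwise $i$ is not next in $v_i$'s queue and $b(i)$ is the vehicle that is next there --- which has not yet entered $v_i$, is unfinished, and is stuck. Every unfinished vehicle then has a blocker, so the finite ``waits-for'' digraph contains a cycle $i_1\!\to\!\cdots\!\to\!i_k\!\to\!i_1$. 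Let $\lambda_i$ be the $\Pi$-time at which vehicle $i$ leaves the cell it currently occupies (equivalently, enters $v_i$). Tracing what the recorded entry orders force in $\Pi$, an ``occupied-cell'' waits-for edge $i\!\to\!j$ gives $\lambda_j\le\lambda_i$, while a ``not-next-in-queue'' edge gives the strict $\lambda_j<\lambda_i$; going around the cycle forces all $\lambda_{i_l}$ equal and every edge of the occupied-cell type. Consequently, at that single common $\Pi$-step, vehicle $i_l$ moves from $u_{i_l}$ onto $u_{i_{l+1}}$ for every $l$ (indices mod $k$), the $u_{i_l}$ being pairwise distinct --- i.e., $i_1,\dots,i_k$ perform a synchronized rotation around the simple cycle $u_{i_1}u_{i_2}\cdots u_{i_k}$ of $\mathcal{G}$.

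To finish I would invoke the grid structure. Since $\mathcal{G}$ is $4$-connected and bipartite, this cycle has length $k\ge 4$ --- $k=2$ would be a head-on swap in $\Pi$ and $k=3$ a $3$-cycle in the bipartite $\mathcal{G}$, both impossible --- and being simple it must turn at some vertex $u_{i_l}$, where the incoming and outgoing edge directions $d,d'$ are neither equal nor opposite, hence perpendicular. But at the rotation step $i_{l-1}$ moves $u_{i_{l-1}}\!\to\!u_{i_l}$ in direction $d$ while $i_l$ simultaneously vacates $u_{i_l}$ in direction $d'$ --- a perpendicular-following collision in $\Pi$, contradicting its validity. Hence no permanent stall exists, so at every step some unfinished vehicle advances; the total number of remaining moves strictly decreases, the loop halts with every vehicle at its goal, and feasibility follows. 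I expect the delicate part of the full write-up to be exactly this cycle bookkeeping --- reconciling the two kinds of waits-for edges with the monotonicity of $\lambda$ and accounting for vehicles that revisit a vertex --- after which the bipartite/perpendicular observation collapses the argument cleanly.
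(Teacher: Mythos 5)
The paper does not actually supply a proof of this proposition --- it states that ``the algorithm is deadlock-free by construction'' and explicitly omits the argument --- so there is nothing to compare your write-up against; judged on its own, your proof is correct and complete in all essential respects. The decomposition is the right one: validity of the concatenated sequential plan $\Pi$, order-preservation of \mcp for collision-freeness, and a waits-for argument for liveness. The key step --- showing that a blocking cycle forces all the $\Pi$-departure times $\lambda_{i_l}$ to coincide, hence a synchronized rotation around a simple grid cycle, which is then killed by bipartiteness (no $2$- or $3$-cycles) plus the fact that any simple grid cycle of length $\ge 4$ has a corner, where the rotation would realize exactly the forbidden perpendicular-following move --- is sound, and your bookkeeping of the two edge types (occupied-cell gives $\lambda_j\le\lambda_i$ via no-meet in $\Pi$, not-next-in-queue gives $\lambda_j<\lambda_i$ via queue order $=$ chronological entry order) checks out, including the observation that a type-(a) blocker cannot be a finished vehicle without contradicting $\Pi$'s validity. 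Two minor points worth making explicit in a full write-up: (i) $\lambda_i$ must be tied to vehicle $i$'s current \emph{progress index} along its de-waited route rather than to the cell $u_i$ itself, since routes may revisit cells (you flag this); and (ii) the memoization in Alg.~\ref{alg:MCP} as written only caches a result \emph{after} the recursive call returns, so termination of the recursion is not guaranteed syntactically but follows from your same rotation-impossibility argument restricted to occupancy edges --- it deserves one explicit sentence rather than a parenthesis.
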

%
%
\begin{algorithm}[!htbp]
\begin{small}
\DontPrintSemicolon
\SetKwProg{Fn}{Function}{:}{}
\SetKwFunction{Fcsmp}{\csmp}
\SetKwFunction{FMCP}{MCPMove}
\SetKw{Continue}{continue}

 \caption{CSMP \label{alg:csmp}}
 
\Fn{\Fcsmp()}{
    \textbf{foreach} $v\in \mathcal{V}$, $VOrder[v]\leftarrow Queue()$\;
    $InitialPlans\leftarrow \{\}$\;
    \For{$i\in \mathcal{C}_p\cup\mathcal{C}_r$}{
    \texttt{SingleMP($i$,$InitialPlans$)}\;
    }
    \texttt{Preprocess($InitialPlans,VOrder$)}\;
    \While{True}{
    \For{$i\in \mathcal{C}$}{
        $mcpMoved\leftarrow Dict()$\;
        \FMCP($i$)\;
    }
    \If{\texttt{AllReachedGoal()}=true}{
    break\;
    }
    }
}
\end{small}
\end{algorithm}

\begin{algorithm}[!htbp]
\begin{small}
\DontPrintSemicolon
\SetKwProg{Fn}{Function}{:}{}
\SetKwFunction{Fcsmp}{\csmp}
\SetKwFunction{FMCP}{MCPMove}
\SetKw{Continue}{continue}

 \caption{MCPMove \label{alg:MCP}}
 
\Fn{\FMCP($i$)}{
\If{$i$ in $mcpMoved$}{
\Return $mcpMoved[i]$\;
}
$u_i\leftarrow$ current position of $i$\;
$v_i\leftarrow$ next position of $i$\;
\If{$i=VOrder[v_i].front()$}{
    $j\leftarrow$ the vehicle currently at $v_i$\;
    \If{$j$=None or (\FMCP($j$)=true and $(u_i,v_i)\not \perp (u_j,v_j)$)}{
        move $i$ to $v_i$\;
        $VOrder[v_i].popfront()$\;
        $mcpMoved[i]$=true\;
        \Return true\;
    }
}
let $i$ wait at $u_i$\;
$mcpMoved[i]$=false\;
\Return false\;
}
\end{small}
\end{algorithm}
\subsubsection{Prioritization}
Sequential planning can always find a solution regardless of the planning order.
However, priorities will affect the solution quality. 
Instead of planning by a random priority order (Alg. \ref{alg:csmp} Line 4-6), when possible, we can first plan for parking since single-vehicle parking only takes two steps.
After all the vehicles in $\mathcal{C}_p$ have been parked, we apply \texttt{SingleMP} to retrieve vehicles.
Among vehicles in $\mathcal{C}_r$, we first apply \texttt{SingleMP} for those vehicles that are closer to their port so that they can reach their targets earlier and will not block the vehicles at the lower row.
\subsection{Complexity Analysis}
In this section, we analyze the time complexity and solution makespan upper bound of the \csmp.
In \texttt{SingleMP}, in order to park/retrieve one vehicle, we assume $n_{b}$ vehicles  may cause blockages and need to be moved out of the way. Clearly $n_b<n$ where $n=n_p+n_r+n_l$.
Therefore, the complexity of computing the paths using \texttt{SingleMP} for all vehicles in $\mathcal{C}_r\cup\mathcal{C}_p$ is bounded by $(n_p+n_r)n$.
The path length of each single-vehicle path computed by \texttt{SingleMP} is no more than $m_1+m_2$.
The makespan of the paths obtained by concatenating all the single-vehicle paths is bounded by $n_r(m_1+m_2)+2n_p$.
This means that \mcp will take no more than $n_r(m_1+m_2)+2n_p$ iterations.
Therefore, the makespan of the solution is upper bounded by $n_r(m_1+m_2)+2n_p$.
In each loop of \mcp, we essentially run DFS on a graph that has $n=n_p+n_r+n_l$ nodes and traverse all the nodes, for which the time complexity is $O(n)$,
Therefore the time complexity of \csmp  is $O(n(n_rm_1+n_rm_2+2n_p))$.
In summary, the time complexity of \csmp is bounded by $O(n(n_rm_1+n_rm_2+2n_p))$, while the makespan is upper bounded by $n_r(m_1+m_2)+2n_p$.

\subsection{Extending \csmp to \ccpr}\label{sec:algo-ccpr}
CSMP can be readily adapted to solve \ccpr.
%
Similar to the \bcpr version, we call \texttt{MCPMove} for each vehicle at each timestep.
When a new request comes at some timestep, we compute the paths for the associated vehicles using the \texttt{SingleMP} and update the information of vertex visit order. 
That is, when we apply \texttt{SingleMP} on a vehicle $i \in \mathcal{C}_p\cup\mathcal{C}_r$, if vehicle $j$ will visit vertex $u$ at timestep $t'$, then we push $i$ to the queue $VOrder[u]$, where the queue is always sorted by the entering time of $u$.
In this way, \mcp will execute the plans while maintaining the visiting order. The previously planned vehicles will not be affected by the new requests and keep executing their original plan.
The main drawback of this method is that it usually has worse solution quality than replanning since the visiting order is fixed.

\section{Solving \crp via Rubik Tables}\label{sec:algo-crp}
%
\crp is essentially solving a static/one-shot \mpp.
On an $m_1\times m_2$ grid, it can be solved by applying the Rubik Table algorithm \cite{szegedy2020rearrangement}, using no more than $2(m_2-2)$ column shuffles and $(m_1-2)$ row shuffles. 
As an example shown in Fig.~\ref{fig:hw_heuristic}, we may use two nearby columns to shuffle the vehicles in a given column fairly efficiently, requiring only $O(m_1)$ steps \cite{GuoYuRSS22}.
%
%
%
%
The same applies to row shuffles.
Depending on the number of parked vehicles, one or more multiple row/column shuffles may be carried out simultaneously. We have (straightforward proofs are omitted due to limited space)
\begin{proposition}
\label{p:arxiv_prop}
\crp may be solved using $O(m_1m_2)$ makespan at full garage capacity and $O(m_1 + m_2)$ makespan when the garage has $\Theta(m_1m_2)$ empty spots and $\Omega(m_1m_2)$ parked vehicles. In contrast, with $\Omega(m_1m_2)$ parked vehicles, the required makespan for solving \crp is $\Omega(m_1 + m_2)$.
\end{proposition}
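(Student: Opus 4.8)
The plan is to establish the three claims in Proposition~\ref{p:arxiv_prop} separately, relying on the Rubik Table reduction already described in Sec.~\ref{sec:algo-crp}. For the first claim, recall that the Rubik Table algorithm solves the reconfiguration with at most $2(m_2-2)$ column shuffles followed by $(m_1-2)$ row shuffles, and that each individual row or column shuffle can be realized in $O(m_1)$ (resp.\ $O(m_2)$) steps using the two-nearby-column trick of \cite{GuoYuRSS22}. At full garage capacity there is no room to run shuffles in parallel, so the shuffles must be serialized; multiplying the $O(m_1+m_2)$ shuffle count by the $O(\max(m_1,m_2))$ cost of each shuffle gives the $O(m_1 m_2)$ makespan bound. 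First I would state this composition carefully, noting that the reserved empty border columns/rows of the garage (the outer frame of the $m_1\times m_2$ grid that is not part of the $(m_1-2)\times(m_2-2)$ parking region) provide exactly the free space the shuffle primitive needs.

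For the second claim, the key observation is that when there are $\Theta(m_1 m_2)$ empty cells distributed appropriately (which, as in the system's design, can be arranged as roughly a constant fraction of the columns being empty), many column shuffles can be executed simultaneously rather than serially: a batch of column shuffles that use disjoint sets of helper columns runs in parallel in a single $O(m_1)$-length phase. With a constant fraction of columns free, the $O(m_2)$ column shuffles collapse into $O(1)$ parallel phases of $O(m_1)$ steps each, and symmetrically the $O(m_1)$ row shuffles collapse into $O(1)$ phases of $O(m_2)$ steps; summing gives $O(m_1+m_2)$. Here I would spell out the batching argument — partitioning the columns to be shuffled into a constant number of groups so that within each group the helper columns used are pairwise disjoint from the target columns — and check that the perpendicular-following and head-on constraints are respected when parallel row and column phases are separated in time.

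For the lower bound, I would use a simple distance/potential argument: with $\Omega(m_1 m_2)$ parked vehicles, a positive fraction of them must end up $\Omega(m_1+m_2)$ away from their start in the worst-case goal configuration (e.g.\ a goal permutation that sends a constant fraction of vehicles across the diameter of the grid, or simply the fact that the grid has diameter $\Theta(m_1+m_2)$ so some vehicle moves that far), and since each vehicle moves at most one cell per timestep, the makespan is at least the maximum start-goal distance, which is $\Omega(m_1+m_2)$. This matches the upper bound of the second claim up to constants.

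The main obstacle I expect is the parallel-shuffling argument in the second claim: one must argue that enough disjoint helper space exists to pack the column shuffles (and later the row shuffles) into $O(1)$ rounds given only the stated $\Theta(m_1 m_2)$ empty-cell / $\Omega(m_1 m_2)$ parked-vehicle hypothesis, and that interleaving does not create the perpendicular-following conflicts forbidden by the model. Making the geometry of which columns serve as escorts for which shuffles precise — and confirming the empty cells can be marshalled into whole empty columns (or an equivalent structure) without extra asymptotic cost — is the delicate part; the other two claims are essentially bookkeeping on top of the Rubik Table bound and a diameter argument, respectively.
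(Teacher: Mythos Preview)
Your outline for the first claim (sequential Rubik Table shuffles at full density) and for the lower bound (a diameter/maximum-Manhattan-distance argument) matches the paper's proof essentially exactly.

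For the second claim, you have correctly located the crux but not supplied it. Your parallel-batching scheme presupposes that the $\Theta(m_1m_2)$ empty cells are already organized into whole empty columns (and later whole empty rows), and you explicitly flag ``marshalling the empty cells into whole empty columns without extra asymptotic cost'' as the delicate step still to be verified. That step is precisely the substance of the paper's argument for this case, and it is not automatic: the hypothesis only gives $\Theta(m_1m_2)$ escorts in arbitrary positions, so the batching argument as stated does not yet go through.

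The paper fills this gap with an explicit $O(m_1+m_2)$ \emph{unlabeled} normalization phase applied to both the start and goal configurations. From an arbitrary configuration one (i) pushes all vehicles rightward as far as possible, (ii) pushes all vehicles upward as far as possible, (iii) redistributes surplus vehicles downward along columns so every row holds the same count, and (iv) slides vehicles leftward into a regular ``block'' pattern consisting of vertical strips of constant width $W=\tfrac{1}{1-\lambda}=O(1)$, each strip containing a fully empty column. Because each substep moves all vehicles monotonically in a single direction, no meet, head-on, or perpendicular-following conflicts occur, and each substep costs $O(m_1)$ or $O(m_2)$. Once both endpoints are in block form, the Rubik Table column and row shuffles run in parallel across strips exactly as you envisioned, yielding the overall $O(m_1+m_2)$ bound. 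This normalization is the missing idea in your proposal; the remainder of your plan is sound.
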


\begin{figure}[!htbp]
    \centering
    \includegraphics[width=\linewidth]{./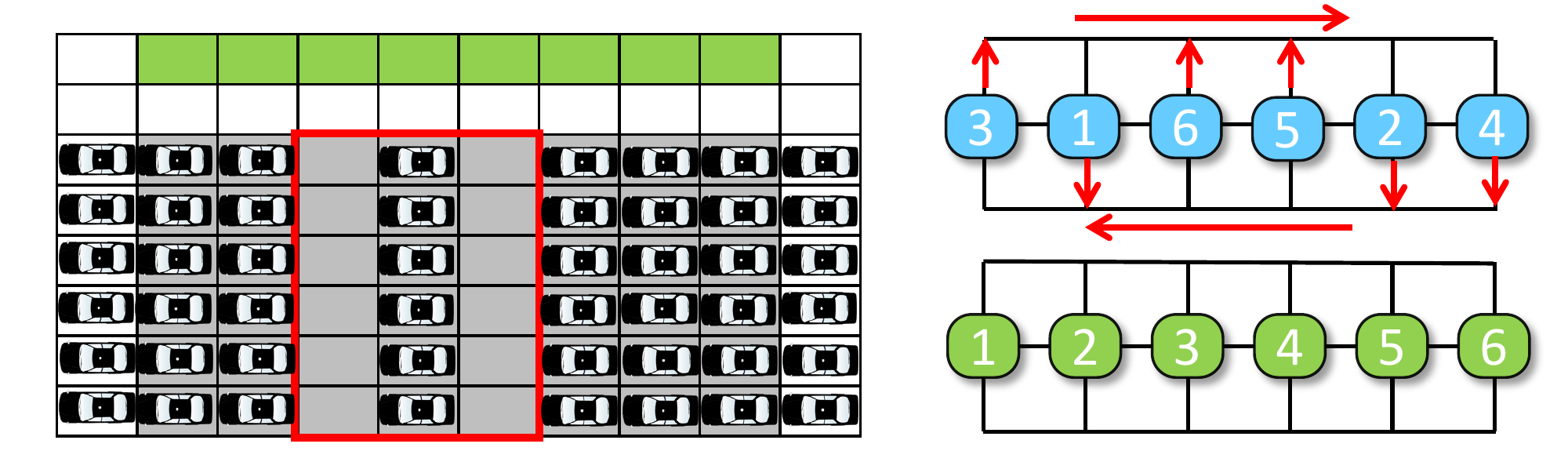}
    \caption{Illustration of the mechanism for ``shuffling'' a single row/column.}
    \label{fig:hw_heuristic}
\end{figure}%

\nt{I recall we had some good discussions and positive results on the 2/3 density setting. These could enhance the paper but we probably don't have space to accommodate them. I am keeping a note here for future reference.}


\section{Evaluation}\label{sec:evaluation}
In  this  section,  we  evaluate the  proposed algorithms.
All experiments are performed on an Intel\textsuperscript{\textregistered} Core\textsuperscript{TM} i7-9700 CPU at 3.0GHz. Each data point is an average over 20 runs on randomly generated instances unless otherwise stated.
\ilp is implemented in C++ and other algorithms are implemented in CPython. 
A video of the simulation can be found at \url{https://youtu.be/XPpOB5f7CzA}.

\subsection{Algorithmic Performance on \bcpr}
\textbf{Varying grid sizes.}
In the first experiment, we evaluate the proposed algorithms on $m\times m$ grids with varying grid side length, under the densest scenarios: there are $(m-2)^2$ vehicles in the system and all the ports are used for either parking or retrieving ($n_p+n_r=n_o$). 
The result can be found in Fig.~\ref{fig:size_exp}.
CONCAT is the method that simply concatenates the sing-vehicle paths. 
In rCSMP, we apply \texttt{SingleMP} on vehicles with random priority order, while in p\csmp, we apply the prioritization strategy.
\begin{figure}[!htbp]
\vspace{2mm}
    \centering
    \includegraphics[width=1.0\linewidth]{./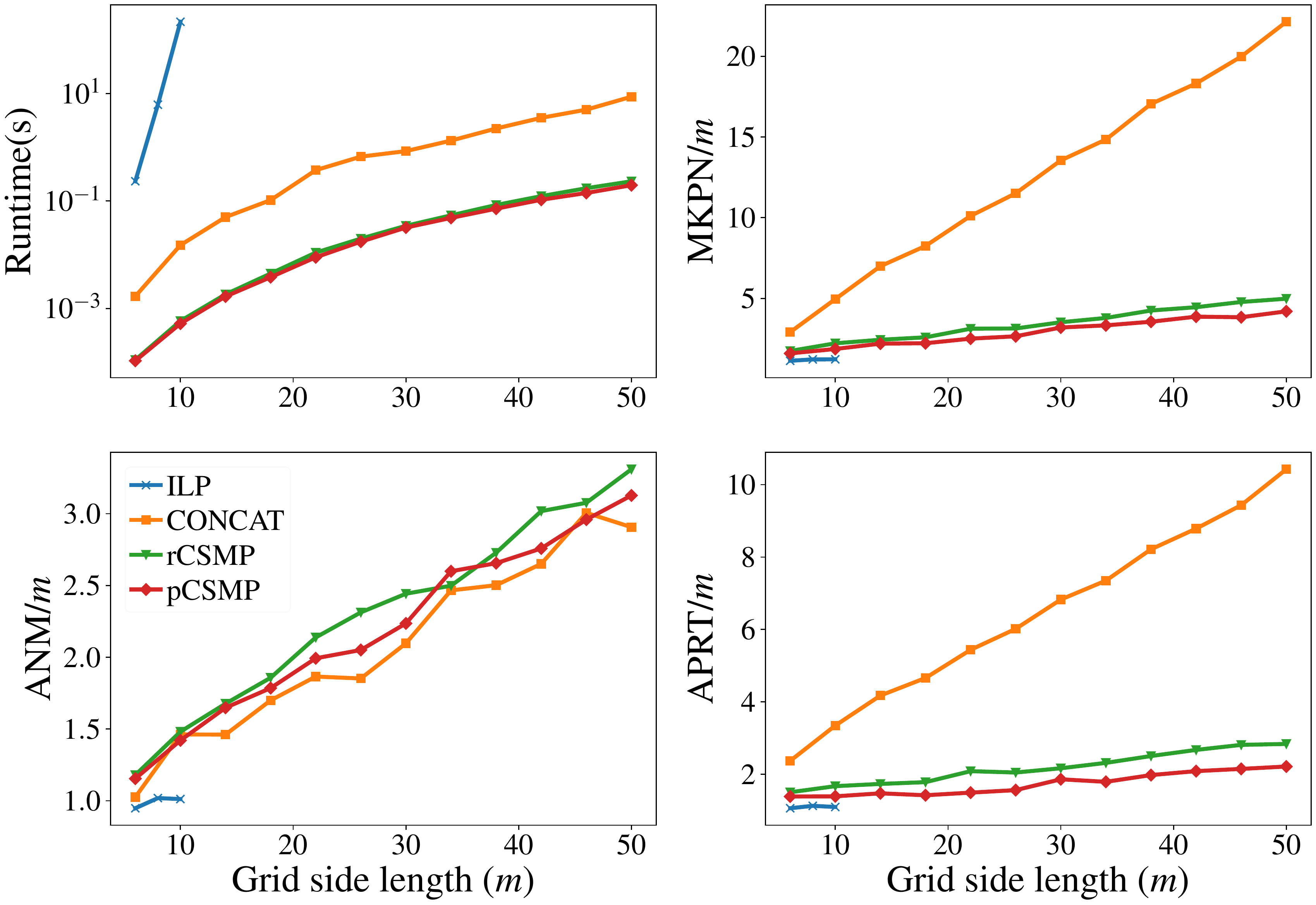}
    \caption{Runtime, MKPN, APRT, AVN data of the proposed methods on $m\times m$ grids under the densest scenarios.}
    \label{fig:size_exp}
\end{figure}%

Among the methods, \ilp has the best solution quality in terms of MKPN, ANM and APRT, which is expected since its optimality is guaranteed. 
However, \ilp has the poorest scalability, hitting a limit with $m\leq 10$ and $n\leq64$.
CONCAT, r\csmp and p\csmp are much more scalable, capable of solving instances on $50\times50$ grids with 2304 vehicles in a few seconds.
%
%
Since CONCAT just concatenates sing-vehicle paths, this results in very long paths compared to r\csmp and p\csmp; we observe that the \mcp procedure greatly improves the concurrency, leading to much better solution quality.
MKPN and APRT of paths obtained by CONCAT can be $10m$-$20m$ while the MKPN and APRT of the paths obtained by r\csmp and p\csmp are $2m$-$4m$.
MKPN and APRT of p\csmp with prioritization strategy is about $20\%$ lower than these of r\csmp. 

\textbf{Impact of vehicle density.}
In the second experiment, we examine the behavior of algorithms as vehicle density changes, fixing grid size at $20\times 20$. We still let $n_p+n_r=n_o$.
The result is shown in Fig.~\ref{fig:density_exp}.
As in the previous case, \ilp can only solve instances with density below $20\%$ in a reasonable time, while the other three algorithms can all tackle the densest scenarios.
For all algorithms, vehicle density in $\mathcal{C}_l$ has limited impact on MKPN and APRT, where r\csmp and p\csmp have much better quality than CONCAT. 
In low-density scenarios, fewer vehicles need to move which may cause blockages for retrieving/parking a vehicle.
As a result, ANM increases as vehicle density increases.

\begin{figure}[!htbp]
    \centering
    \includegraphics[width=1.0\linewidth]{./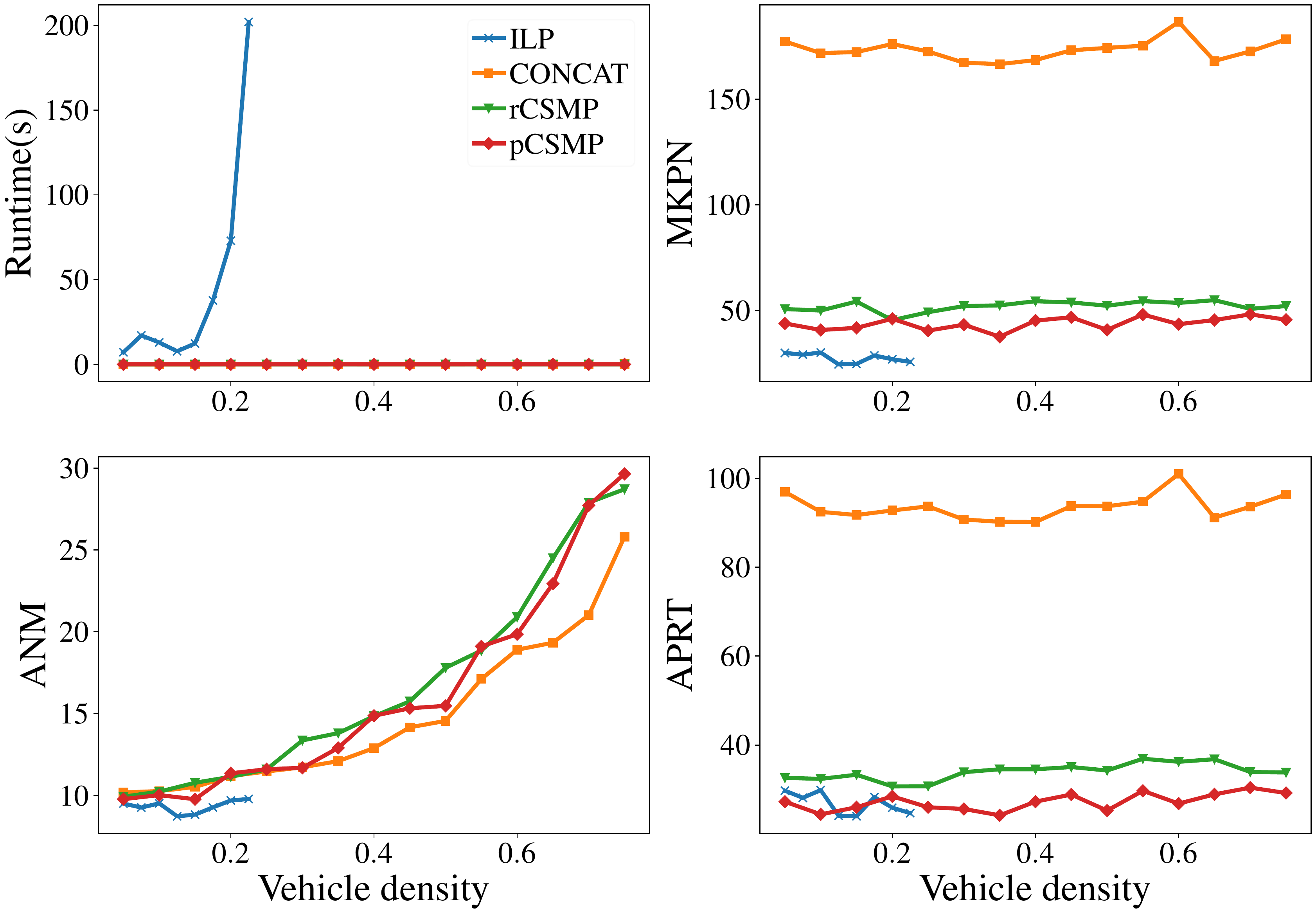}
    \vspace{-4mm}
    \caption{Runtime, \makespan, \aprt, \anm data of the proposed methods on $20\times 20$ grids with varying vehicle density.}
    \label{fig:density_exp}
\end{figure}%
    \vspace{-1.5mm}
\subsection{Algorithmic Performance on \ccpr}
    \vspace{-1mm}
\textbf{Random retrieving and parking.}
We test the continuous \csmp on a $12\times 12$ grid with $10$ ports. 
In each time step, if a port is available,  there would be a new vehicle that need to be parked appearing at this port with probability $p_p$ if it does not exceed the capacity. And with probability $p_r$ this port will be used to retrieve a random parked vehicle if there is one.
We simulate the following three scenarios:

(i). Morning rush hours. Initially, no vehicles are parked. There are many more requests for parking than retrieving: $p_p=0.6,p_r=0.01$.

(ii). Workday hours. Initially, the garage is full. Request for parking and retrieval are equal: $p_p=p_r=0.05$.

(iii). Evening rush hours. Initially, the garage is full. Retrieval requests dominate parking: $p_p=0.01, p_r=0.6$.

The maximum number of timesteps is set to 500. We evaluate the average retrieval time, average parking time, and total number of moves under these scenarios. The result is shown in Fig.~\ref{fig:online_scenarios}.
Online CSMP achieves the best performance in the morning due to fewer retrievals.
On the other hand, the average retrieval time in all three scenarios is less than $2m$ and parking time is less than $m$. This shows that the algorithm is able to plan paths with good solution quality even in the densest scenarios and rush hours.

\begin{figure}[!htbp]

    \centering
    \includegraphics[width=1.0\linewidth]{./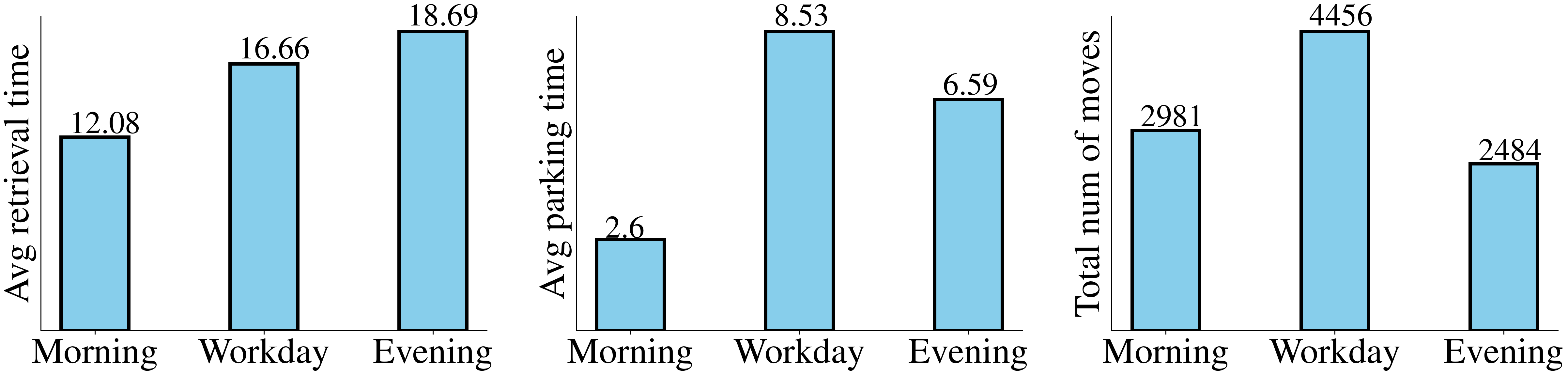}
    \caption{\ccpr performance statistics under three garage traffic patterns.}
    \label{fig:online_scenarios}
    \vspace{-2mm}
\end{figure}%

\textbf{Benefits of shuffling.}
In this experiment, we examine the effect of the shuffling (for solving \crp).
We assume that each vehicle is assigned a retrieval priority order, as to be expected in the evening rush hours when some people go home earlier than others. 
We perform the \emph{column} shuffle operations on the vehicles to facilitate the retrieval.
The makespan, average number of moves, and computation time of the column shuffle operations on $m\times m$ grids with different grid sizes under the densest settings are shown in Fig.~\ref{fig:shuffles}(a)-(c).
While the paths of shuffling can be computed in less than 1 second, the makespan of completing the shuffles scales linearly with respect to $m^2$ and the average number of moves scales linearly with respect to $m$. 

\begin{figure}[!htbp]
\vspace{-1mm}
    \centering
    \includegraphics[width=1.0\linewidth]{./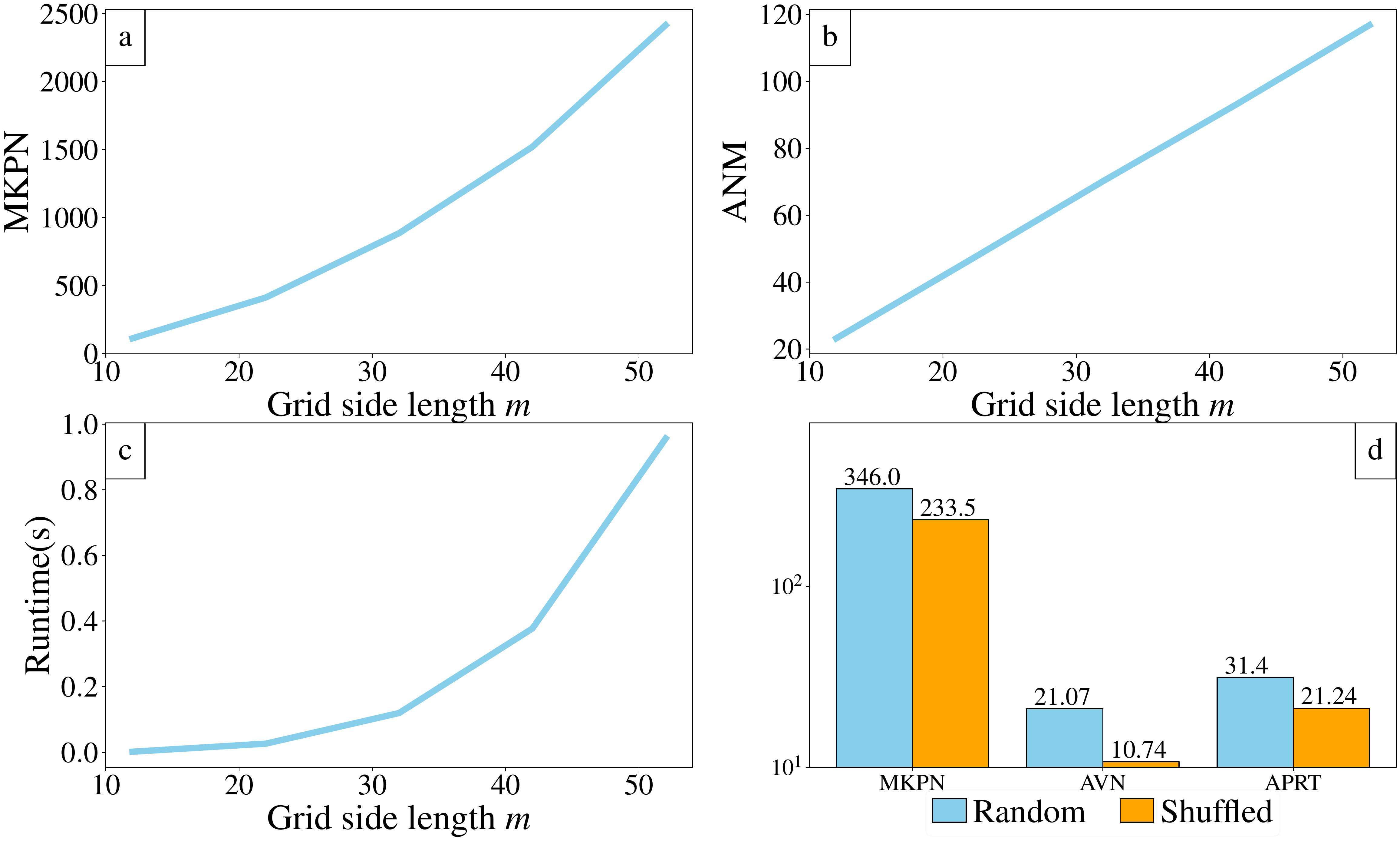}
    \caption{Statistics of performing the column shuffle operations on $m\times m$ grids with varying grid size.}
    \label{fig:shuffles}
\end{figure}%
After shuffling, continuous \csmp with $p_r=1,p_p=0$ is applied to retrieve all vehicles and compared to the case where no shuffling is performed. 
The outcome makespan, average number of moves per vehicle, and average retrieval time per vehicle are shown in Fig.~\ref{fig:shuffles}(d).
Compared to unshuffled configuration, \csmp is able to retrieve all the vehicles with $30\%$-$50\%$ less number of moves and  retrieval time 
(note that logarithmic scale is used to fit all data), showing that rearranging vehicles in anticipation of rush hour retrieval provides significant benefits.  


\vspace{-1mm}
\section{Conclusion and Discussions}\label{sec:conclusion}
\vspace{-1mm}
In this work, we present the complete physical and algorithmic design of an automated garage system, aiming at allowing the dense parking of vehicles in metropolitan areas at high  speeds/efficiency. 
We model the retrieving and parking problem as a multi-robot path planning problem, allowing our system to support nearly $100\%$ vehicle density.
The proposed \ilp algorithm can provide makespan-optimal solutions, while \csmp algorithms are highly scalable with good solution quality. Also clearly shown is that it can be quite beneficial to perform vehicle rearrangement during non-rush hours for later ordered retrieval operations, which is a unique high-utility feature of our automated garage design. 
%

For future work, we intend to further improve \csmp's scalability and flexibility, possibly leveraging the latest advances in  \mpp/MAPF research. 
%
%
We also plan to extend the automated garage design from 2D to 3D, supporting multiple levels of parking. Finally, we would like to build a small-scale test
beds realizing the physical and algorithmic designs.

\ifarxiv
\section{Appendix}
\begin{proof}[Proof of Proposition \ref{p:arxiv_prop}]
For the full density, Rubik Table algorithm can be applied here.
To shuffle a column, one can apply the high-way motion primitive by utilizing  two empty columns or the line-merge-sort motion primitive by utilizing one empty column \cite{GuoYuRSS22}.
Each column shuffle requires $O(m_1)$ steps and there are $m_2$ such columns, and all the columns can be shuffled \emph{sequentially} using the one/two empty columns.
The same can be applied to shuffle a row.
Thus, any reconfiguration can be done using $O(m_1m_2)$ makespan. 

Next, we prove that when there are $\Theta(m_1m_2)$ escorts, it requires $O(m_1+m_2)$  makespan for the reconfiguration, using the Rubik Table Algorithm.
The key point is to find a  method to shuffle all rows/columns in $O(m_1+m_2)$.

Assume that we have $\lambda m_1m_2 (0<\lambda<1)$ vehicles and $(1-\lambda)m_1m_2$ escorts.
To do that, first, we first apply unlabeled \mpp to convert the start and goal configurations to \emph{block configurations}  (Fig.~\ref{fig:island} (d) ) which are composed of several  blocks, and each block has at least one empty column or row.
For simplicity, we assume that each black cell of the  block is filled with a vehicle.
If not, we fill in with virtual vehicles.
It is not hard to see that shuffling all the columns in each block could be done in $O(Wm_2)$ steps, 
where $W=\frac{1}{1-\lambda}=O(1)$ is the width of the block and shuffles in different blocks can be performed \emph{in parallel}.
Row shuffles can be done similarly in $O(m_1)$.

Then we only need to prove that  the unlabeled conversion can be completed in $O(m_1+m_2)$. 
The conversion can be done in the following way shown in Fig.~\ref{fig:island}.
First, starting from the initial configuration, we move all the vehicles rightwards as much as possible. (Fig.~\ref{fig:island}(a) to Fig.~\ref{fig:island}(b)).
In the second step, vehicles are moved upwards  (Fig.~\ref{fig:island} (b) to Fig.~\ref{fig:island}(c)).
To get configuration (d) from configuration (c), the extra vehicles in the rows that have more vehicles can move downwards along the column to the rows that have fewer vehicles. 
After each row has the same number of vehicles, all the vehicles can move along the rows leftwards to form the configuration (d).
\begin{figure}[!htbp]
    \centering
    \includegraphics[width=\linewidth]{./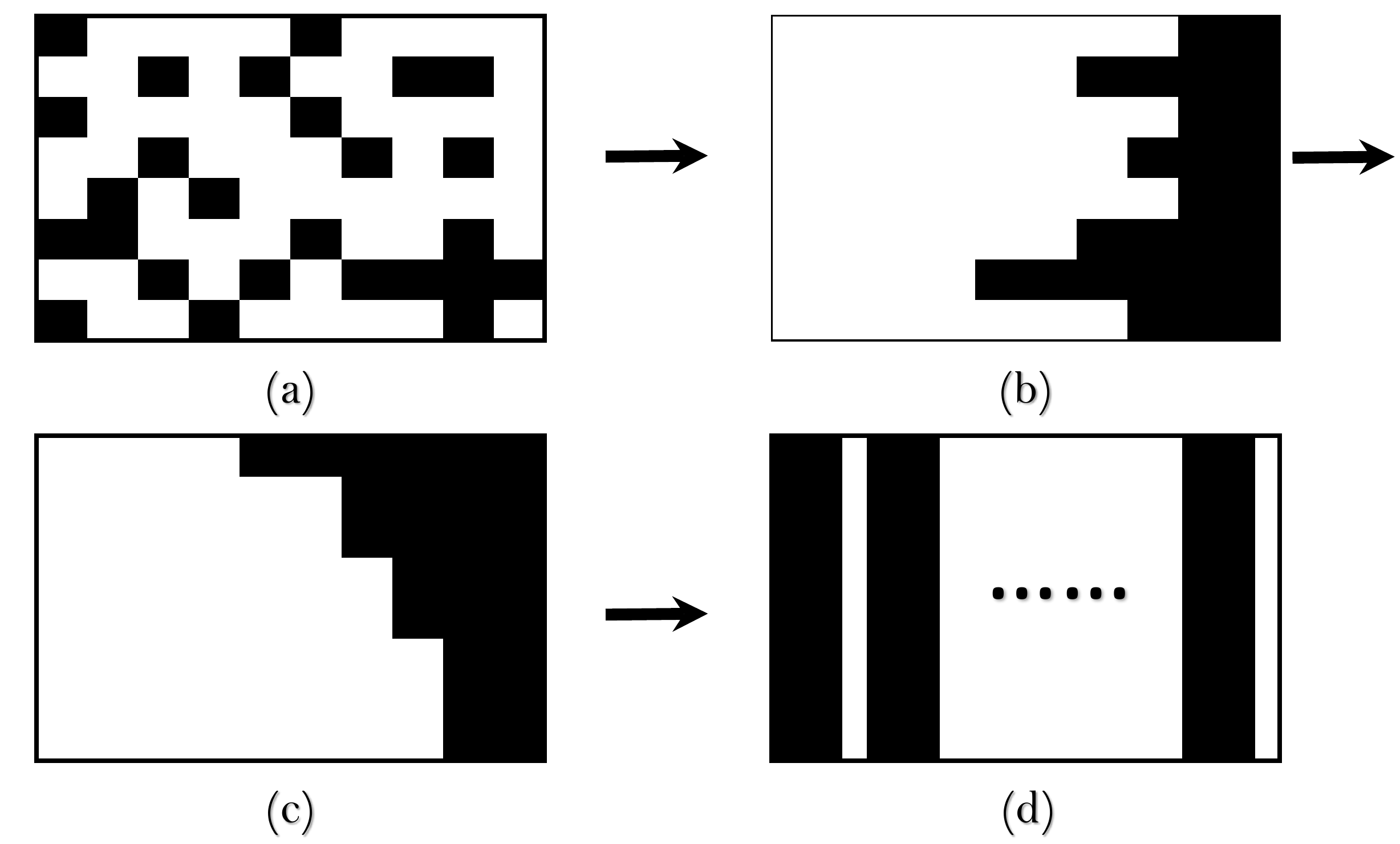}
    \caption{Steps taking an arbitrary configuration to a ``block'' configuration in $O(m_1 + m_2)$ steps, on which Rubik Table results can be applied. (a) Initial configuration. (b) Configuration obtained  by moving all vehicles rightwards from (a). (c) Configuration obtained by moving all vehicles upwards from (b). (d) The desired regular ``block'' configuration.}
    \label{fig:island}
\end{figure}%
In the whole process, all the vehicles always move in one direction, and there would be no meet, head-on, or perpendicular following conflicts happening.
Therefore, the unlabeled conversion can be completed in $O(m_1+m_2)$.
Combining all together, we conclude  that when there are $\Theta(m_1m_2)$ vehicles, the reconfiguration of VSP takes $O(m_2+m_1)$.

The lower bound can be obtained by computing the maximum  Manhattan distance between the starts and goals. 
For random starts and goals with $\Omega(m_1m_2)$ vehicles, the lower bound is $m_1+m_2-o(m_1+m_2)$ \cite{GuoYuRSS22}.
Thus, the required number of steps for the reconfiguration is $\Omega(m_1+m_2)$ when there are $\Omega(m_1m_2)$ vehicles.
%

%
%
%
%

\end{proof}
\fi

\bibliographystyle{formatting/IEEEtran}
\bibliography{bib/all}

\end{document}